\documentclass[conference]{IEEEtran}
\usepackage{times}

% numbers option provides compact numerical references in the text. 
\usepackage[numbers]{natbib}
\usepackage{multicol}
\usepackage{mathtools}

\usepackage[]{graphicx}
\usepackage{enumerate}
\usepackage{epstopdf}
\usepackage{subfigure}

\usepackage{amssymb}
\usepackage{amsthm}

\graphicspath{{./fig/}}
\usepackage{color}
\usepackage{amsmath}
\usepackage{amssymb}
\usepackage{graphicx}
\usepackage{comment,xspace}
\usepackage{hyperref}
\usepackage{fancybox}

%\newcommand{\todo}[1]{\par\noindent{\color{red}\raggedright\sc{#1}
%    \par\marginpar{\Large \bf $\star$}}}

%%%%%
%% If you use a font encoding package, please enter it here, i.e.,
%  \usepackage{T1enc}

%% How many levels of section head would you like numbered?
%% 0= no section numbers, 1= section, 2= subsection, 3= subsubsection
%%==>>
\setcounter{secnumdepth}{3}

%%For margin comments
%\newcommand{\todomar}[1]{\marginpar{\tiny\color{red}#1}}
%% Math defs

%For theorems

\newtheorem{theorem}{Theorem}[section]

\newtheorem{lemma}[theorem]{Lemma}

\newcommand{\real}{{\mathbb{R}}}
\newcommand{\reals}{\real}

\renewcommand{\natural}{{\mathbb{N}}}
\newcommand{\naturals}{\natural}

%\newcommand{\margin}[1]{\marginpar{\tiny\ttfamily#1}}

%\newcommand{\proof}{\noindent{\bf Proof: \indent}}
%\newcommand{\proofover}{\hfill\vrule height8pt width6pt depth
%                0pt\newline}

% Added for KFMT

\usepackage{algorithmic}
\usepackage{algorithm}

\title{Control of Robotic  Mobility-On-Demand Systems:  \\a Queueing-Theoretical Perspective}
\author{Rick Zhang and Marco Pavone}

\begin{document}
\maketitle

\begin{abstract}
In this paper we present and analyze a queueing-theoretical model for autonomous mobility-on-demand (MOD) systems where robotic, self-driving vehicles transport customers within an urban environment and rebalance themselves to ensure acceptable quality of service throughout the entire network. We cast  an autonomous MOD system within a closed Jackson network model with passenger loss. It is shown that an optimal rebalancing algorithm minimizing the number of (autonomously) rebalancing vehicles  and keeping vehicles availabilities balanced throughout the network can be found by solving a linear program.  The theoretical insights are used to design a robust, real-time rebalancing algorithm, which is applied to a case study of New York City. The case study  shows that the current taxi demand in Manhattan can be met with about 8,000 robotic vehicles (roughly 60\% of the size of the current taxi fleet). Finally, we extend our queueing-theoretical setup to include congestion effects, and we study the impact of autonomously rebalancing vehicles on overall congestion. Collectively, this paper provides a rigorous approach to the problem of system-wide coordination of autonomously driving vehicles, and provides one of the first characterizations of the sustainability benefits of robotic transportation networks.
\end{abstract}

\IEEEpeerreviewmaketitle

\section{Introduction}

According to United Nations estimates, urban population will double in the next 30 years \cite{UN:07}; 
for example, in the next 10 years, the population of Riyadh, Saudi Arabia, will double from 5 million to 10 million.
Given the limited \hbox{availability for} additional roads and parking spaces in current (mega)-cities,  private automobiles appear as an \emph{unsustainable} solution for the future of \emph{personal} urban mobility \cite{Mitchel.Bird.ea:10}. Arguably, \hbox{one of the} most promising approaches to cope with this problem is one-way vehicle sharing with electric cars (referred to as Mobility-On-Demand, or MOD), which directly targets the problems of parking spaces, pollution, and low vehicle utilization rates \cite{Mitchel.Bird.ea:10}. Limited-size MOD systems with human-driven vehicles have recently been deployed in several European and American cities \cite{C2G:11}. However, such systems lead to vehicle \emph{imbalances}, that is some stations become rapidly depleted of vehicles while others have too many, due to some stations being more popular than others. Somewhat surprisingly, even if the transportation network is symmetric (that is, the underlying network topology is a regular grid and arrival rates and routing choices of customers at all nodes are uniform), the \emph{stochastic} nature of customer arrivals to the stations will quickly drive the system out of balance and hence to instability (since the customer queue will grow without bound at some stations) \cite{Fricker2012}. 

The related problem of rebalancing in the increasingly popular bike-sharing systems is solved using trucks which can carry many bikes at the same time, and algorithms have very recently been developed to optimize the truck routes \cite{DellAmico2013, DiGaspero2013}. Since this approach is not feasible with cars, the work in \cite{Smith2013} considers the possibility of hiring a team of rebalancing drivers whose job is to rebalance the vehicles throughout the transportation network. However, with this approach the rebalancing drivers themselves become unbalanced, and one needs to  ``rebalance the rebalancers,'' which significantly increases congestion and costs  \cite{Smith2013}. 
Another option would be to incentivize ride sharing \cite{Barth2001}, which, unfortunately, defeats the purpose of a MOD system to ensure \emph{personal} mobility.

Recently, a transformational technology has been proposed in \cite{Pavone.ea:IJRR12, Burns2013}, whereby \emph{driverless} electric cars shared by the customers  provide on-demand mobility. Autonomous driving holds great promise for MOD systems because robotic vehicles can rebalance themselves (thus eliminating the rebalancing problem at its core), enable system-wide coordination, free passengers from the task of driving, and \emph{potentially} increase safety.
Indeed, robotic vehicles specifically designed for personal urban mobility are already being marketed (e.g., the Induct Navia vehicle \cite{Navia}, the General Motors' EN-V  vehicle \cite{GMENV}, and the Google car \cite{Fisher2013}). Yet, little is known about how to design and operate robotic transportation networks \cite{Pavone.ea:IJRR12}.

\emph{Statement of contributions}:  The objective of this paper is to develop a model of, study rebalancing algorithms for, and evaluate the potential benefits of a MOD system where mobility is provided by driverless cars (henceforth referred to as autonomous MOD system). Rebalancing algorithms for autonomous MOD systems have been investigated in \cite{Pavone.ea:IJRR12} under a fluidic approximation  (i.e., customers and vehicles are modeled as a continuum). While this approach provides valuable insights for the operation of an autonomous MOD system, by its very nature, it does not provide information about the effect of stochastic fluctuations in the system (e.g., due to the customers' arrival process) and, most importantly, it does not allow the computation of key performance metrics such as availability of vehicles at stations and customer waiting times. This motivates the queueing-theoretical approach considered in this paper. In this respect, our work is related to \cite{George2011, Waserhole2013}, where a transportation network comprising traditional (i.e., human-driven) shared vehicles is modeled within the framework of Jackson networks \cite{Serfozo1999}. The key technical difference  is that in this paper we address the problem of \emph{synthesizing} a rebalancing policy, rather than \emph{analyzing} the evolution of the vehicle distribution under the customers' routing choices.

Specifically, the contribution of this paper is fourfold. First, we propose a queueing-theoretical model of an autonomous MOD system cast within a Jackson network model. Second, we study the problem of synthesizing rebalancing algorithms,  where the control objective is to minimize the number of (autonomously) rebalancing vehicles on the roads while keeping vehicle availabilities balanced throughout the network. Remarkably, we show that under certain assumptions an optimal policy can be solved as a linear program. Third, we  apply our theoretical results to a case study of New York City, which shows that the current taxi demand in Manhattan can be met with about 8,000 robotic vehicles (roughly 60\% of the size of the current taxi fleet). This shows the potential of autonomous MOD systems. Finally, by leveraging our queueing-theoretical setup, we study the potential detrimental effect of rebalancing on traffic congestion (rebalancing vehicles, in fact, \emph{increase} the number of vehicles on the roads). Our study suggests that  while autonomously rebalancing vehicles can have a detrimental impact on traffic congestion in already-congested systems, in most cases this is not generally a concern as rebalancing vehicles ``tend" to travel along less congested roads. To the best of our knowledge, this is the first paper to provide a rigorous, stochastic approach to the problem of system-wide coordination of autonomously driving vehicles.

\emph{Organization}:  The remainder of the paper is structured as follows: In Section \ref{sec:back} we briefly review some well-known results of queueing networks, specifically Jackson networks. In Section \ref{sec:model} we show how to model an autonomous MOD system with rebalancing within a Jackson network model. In Section \ref{sec:rebalancing} we formulate the optimal rebalancing problem, we show that it can be solved via a linear program, we provide an iterative algorithm to compute relevant performance metrics (chiefly, vehicle availability at stations), and we use the theoretical insights to design a robust, real-time rebalancing policy. In Section \ref{sec:fleet} we apply our model and algorithms to a case study of New York City, while in Section \ref{sec:cong} we extend our queueing-theoretical setup to include congestion effects. Finally, in Section \ref{sec:conc} we draw our conclusions, and present directions for future research.

\vspace{-0.3em}
\section{Background Material}\label{sec:back}
In this section we review some key results from the theory of Jackson networks,  on which we will rely extensively later in the paper. 
Consider a network consisting of $\lvert \mathcal{N} \rvert$ first-come first-serve nodes, or queues, where $\mathcal{N}$ represents the set of nodes in the network. Discrete customers arrive from outside the network according to a stochastic process or move among the nodes. Customers that arrive at each node are serviced by the node, and proceed to another node or leave the system. A network is called \emph{closed} if the number of customers in the system remains constant and no customers enter or leave the network. A Jackson network is a Markov process where customers move from node to node according to a stationary routing distribution $r_{ij}$ and the service rate $\mu_i(n)$ at each node $i$ depends only on the number of customers at that node, $n$ \cite[p.9]{Serfozo1999}. For the remainder of this paper, we consider only closed networks. The state space of a closed Jackson network with $m$ customers is given by
${\Omega_m = \Big\{ x = (x_1, x_2, ... , x_{\lvert \mathcal{N} \rvert} ) : \sum_{i = 1}^{\lvert \mathcal{N} \rvert} x_i = m, x_i \in \mathbb{Z}_{\geq 0} \Big\}}$,
where $x_i$ is the number of customers at node $i$. Jackson networks are known to admit a product-form stationary distribution, where the stationary distribution of the network is given by a product of the distribution of each node. In equilibrium, the arrival rates (or throughput) at each node satisfy the traffic equations
\begin{equation}
\pi_i = \sum_{j \in \mathcal{N}} \pi_j r_{ji} \;\;\;\; \forall i \in \mathcal{N}.
\label{eq:traffic}
\end{equation}

For a closed network, equation \eqref{eq:traffic} does not have a unique solution, and $\pi = (\pi_1\ \pi_2\ ...\ \pi_N)^T$ only determines the arrival rates up to a constant factor, and hence are sometimes called the relative throughput.  The stationary distribution of the network is given by 
\begin{equation*}
\mathbb{P}(x_1, x_2, ..., x_{\lvert \mathcal{N} \rvert}) = \frac{1}{G(m)} \prod_{j = 1}^{\lvert \mathcal{N} \rvert} \pi^{x_j}_j \prod_{n=1}^{x_j} \mu_j(n)^{-1}.
\end{equation*}

The quantity $G(m)$ is the normalization constant needed to make $\mathbb{P}(x_1, x_2, ..., x_{\lvert \mathcal{N} \rvert})$ a probability measure, and is given by $G(m) = \sum_{x \in \Omega_m} \prod_{j = 1}^{\lvert \mathcal{N} \rvert} \pi^{x_j}_j \prod_{n=1}^{x_j} \mu_j(n)^{-1}$. Many performance measures of closed Jackson networks can be expressed in terms of the normalization factor $G(m)$. In \cite[p.27]{Serfozo1999}, it is shown that the actual throughput of each node (average number of customers moving through node $i$ per unit time) is given by 
\begin{equation}
\Lambda_i(m) = \pi_i \, G(m-1)/G(m).
\label{eq:throughput}
\end{equation}
One can further define the quantity 
\begin{equation}
\gamma_i = \pi_i/\mu_i(1) \;\;\;\; \forall i \in \mathcal{N},
\label{eq:gamma}
\end{equation}
where $\gamma_i$ is referred to as the relative utilization of node $i$. 
Lavenberg \cite[p.128]{Lavenberg1983} showed that the marginal distribution of the queue length variable $X_i$ at node $i \in \mathcal{N}$ is given by
\begin{equation*}
\mathbb{P}(X_i = x_i) = \gamma_i^{x_i}[G(m-x_i) - \gamma_i G(m - x_i - 1)]/G(m).
\end{equation*}

A quantity that will be useful is the probability that a node has at least 1 customer, which we refer to as $A_i(m)$. This is given by
\vspace{-0.3em}
\begin{align}
A_i(m) &= 1 - P(X_i = 0)  \notag \\
&= 1 - \frac{G(m)- \gamma_i G(m-1)}{G(m)} = \frac{\gamma_i G(m-1)}{G(m)}.
\label{eq:Ai}
\end{align}
\section{Model Description and Problem Formulation} \label{sec:model}

\subsection{Model of autonomous MOD system}
In this paper, we model an autonomous MOD system within a queueing theoretical framework. Consider $N$ stations placed within a given geographical area and $m$ (autonomous) vehicles that provide service to customers. Customers arrive at each station $i$ according to a time-invariant Poisson process with rate $\lambda_i \in \reals_{>0}$. Upon arrival, a customer at station $i$ selects a destination $j$ with probability $p_{ij}$, where $p_{ij}\in \reals_{\geq 0}$, $p_{ii}=0$, and  $\sum_j p_{ij} = 1$. Furthermore, we assume that the probabilities $\{p_{ij}\}_{ij}$ constitute an irreducible Markov chain. If there are vehicles parked at station $i$, the customer takes the vehicle and travels to her/his selected destination. Instead, if the station is empty of vehicles, the customer immediately leaves the system. This type of customer model will be referred to as a ``passenger loss'' model (as opposed to a model where customers form a queue at each station).  A consequence of the passenger loss model is that the number of passengers at each station at a fixed instant in time is 0 (since passengers either depart immediately with a vehicle or leave the system). We assume that each station has sufficiently many parking spaces so that vehicles can always immediately park upon arrival at a station. The travel time from station $i$ to station $j$ is an exponentially distributed random variable with mean equal to $T_{ij}\in \reals_{> 0}$. The travels times for the different customers are assumed to constitute an independently and identically distributed sequence (i.i.d.). The vehicles can \emph{autonomously}
travel throughout the network in order to rebalance themselves and best anticipate future demand. The performance criterion that we are interested in is the availability of vehicles at each station (or conversely the probability that a customer will be lost). 

A few comments are in order. First, our model captures well the setup with impatient customers, not willing to make use of a MOD system if waiting is required. In this respect, our model appears suitable to study the benefits of autonomous MOD systems whenever high quality of service (as measured in terms of average waiting times for available vehicles) is required. From a practical standpoint, the loss model assumption significantly simplifies the problem, as it essentially allows us to decouple the ``vehicle process'' and the ``customer process'' (see Section \ref{sec:network}). Second, travel times, in practice, do not follow an exponential distribution. However we make this assumption as (i) it simplifies the problem considerably, and (ii) reasonable deviations from this assumption have been found not to alter in any practical way the predictive accuracy of similar spatial queuing models used for vehicle routing \cite{larson1981urban}. Third, the assumption that the probabilities $\{p_{ij}\}_{ij}$ constitute an irreducible Markov chain appear appropriate for dense urban environments. Finally, our model does not consider congestion, which is clearly a critical aspect for the efficient system-wide coordination of autonomous vehicles in a MOD system. The inclusion of congestion effects will be discussed in Section  \ref{sec:cong}.

\subsection{Casting an autonomous MOD system into a Jackson model} \label{sec:network}

The key idea to cast an autonomous MOD system into a Jackson model is to consider an \emph{abstract queueing network} where we identify the stations with single-server (SS) nodes (also referred to as ``station'' nodes) and the roads with infinite-server (IS) nodes (also referred to as ``road'' nodes). Assume, first, the simplified scenario where vehicles do not perform rebalancing trips (in which case, the model is essentially identical to the one in \cite{George2011}). In this case, at each station node, vehicles form a queue while waiting for customers and are ``serviced'' when a passenger arrives. A vehicle departing from a SS node moves to the IS node that connects the origin to the destination selected by the customer. After spending an exponentially distributed amount of time in the IS node (i.e., the ``travel time''), the vehicle moves to the destination SS node. According to our model, once a vehicle leaves a SS (station) node $i$, the probability that it moves to the IS (road) node $ij$ is $p_{ij}$. The vehicle then moves to SS (station) node $j$ with probability 1. Note that with this identification we have modeled a MOD system (at least in the case without rebalancing)  as a \emph{closed queueing network with respect to the vehicles}. Note that the road queues are modeled as infinite-server queues as the model does not consider congestion effect (in Section \ref{sec:cong} we will see that if congestion is taken into account the road queues become \emph{finite}-server queues).

More formally, denote by $S$ the set of single-server nodes  and $I$ the set of infinite-server nodes. Each station is mapped  into an SS node, while each road is mapped into an IS node. The set of all nodes in the abstract queueing network is then given by  $\mathcal{N} = S \cup I$. Since each SS node is connected to every other SS node, and since $p_{ii} = 0$ (hence the road node $ii$ does not need to be represented), the number of nodes in the network is given by $N(N-1) + N  =N^2$, in other words, $\lvert\mathcal{N}\rvert = N^2$. 
For each IS node $i \in I$, let $\text{Parent}(i)$ and $\text{Child}(i)$ be the origin and destination nodes of $i$, respectively. As explained before, vehicles in the abstract queueing network move between SS nodes as IS nodes according to the routing matrix  $\{r_{ij}\}_{ij}$:
\begin{equation}
r_{ij} =
\begin{cases}
	p_{il} &  i \in S, j \in I \text{ where } i = \text{Parent}(j), l = \text{Child}(j), \\
	1 &  i \in I, j \in S \text{ where } j = \text{Child}(i), \\
	0 & \text{otherwise},
\end{cases}
\label{eq:rij}
\end{equation}
where the first case corresponds to a move from a SS node to an IS node (according to the destination selected by a customer), and the second case to a move from an IS node to the unique SS node corresponding to its destination. Furthermore, the service times at each node $i \in \mathcal N$ are exponentially distributed with service rates given by
\begin{equation}
\mu_{i}(n) = 
\begin{cases}
	\lambda_{i} & \text{if } i \in S, \\
	n \cdot \mu_{jk} & \text{if } i \in I, j = \text{Parent}(i), k = \text{Child}(i),
\end{cases}
\label{eq:muij}
\end{equation}
where $n \in\{0, 1, \ldots\, m\}$ is the number of vehicles at node $i$, and  $\mu_{jk} = 1/T_{jk}$. The first case is the case where vehicles wait for customers at stations, while the second case is the case where vehicles spend an exponentially distributed travel time to move between stations (note that the IS nodes correspond to infinite-server queues, hence the service rate is proportional to the number of vehicles in the queue). As defined, the abstract queuing network is a closed Jackson network, and hence can be analyzed with the tools discussed in Section \ref{sec:back}.

Assume, now, that we allow the vehicles to autonomously rebalance throughout the network. To include rebalancing while staying within the Jackson network framework, we focus on a particular class of stochastic rebalancing policies described as follows. Each station $i$ generates ``virtual passengers'' according to a Poisson process with rate $\psi_i$, \emph{independent} of the real passenger arrival process, and routes these virtual passengers to station $j$ with probability $\alpha_{ij}$ (with $\sum_j \alpha_{ij} =1$ and $\alpha_{ii} = 0$). As with real passengers, the virtual passengers are lost if the station is empty upon arrival. Such class of  rebalancing policies \emph{encourages} rebalancing but does \emph{not} enforce a rebalancing rate, which allows us to maintain tractability in the model. 

One can then combine the real passenger arrival process with the virtual passenger process (assumed independent) using the independence assumption to form a model of the same form as the one described in Section \ref{sec:network} while taking into account vehicle rebalancing.
Specifically, we consider the same set of SS nodes and IS nodes (since the transportation network is still the same). Let $\{A^{(i)}_t, t \geq 0\}$ be the total arrival process of real \emph{and} virtual passengers at station $i \in S$, and denote its rate with $\tilde{\lambda}_i$. The process  $A^{(i)}_t$ is Poisson since it is the superposition of two independent Poisson processes. Hence, the rate $\tilde{\lambda}_i$ is given by
\vspace{-0.2em}
\begin{equation}\label{eq:lgen}
\tilde{\lambda}_i = \lambda_i + \psi_i.
\vspace{-0.2em}
\end{equation}
Equivalently, one can view the passenger arrival process and the rebalancing process as the result of Bernoulli splitting on $A^{(i)}_t$ with a probability $p_i$ satisfying 
\vspace{-0.2em}
\begin{equation}\label{eq:Bernoulli}
\psi_i = p_i \tilde{\lambda}_i, \quad \lambda_i = (1-p_i) \tilde{\lambda}_i. 
\vspace{-0.2em}
\end{equation}
Let us refer to passenger arriving according the the processes $\{A^{(i)}_t, t \geq 0\}$ as generalized passengers. The probability  $\tilde{p}_{ij}$ that  a generalized passenger arriving at station $i$ selects a destination $j$ is given by
\begin{spreadlines}{-0.05em}
\begin{align}
\tilde{p}_{ij} &= \mathbb{P}(i \rightarrow j \mid \text{virtual})\, p_i + \mathbb{P}(i \rightarrow j \mid \neg \text{virtual})\, (1 - p_i) \nonumber \\ 
&= \alpha_{ij} p_i + p_{ij} (1-p_i),
\label{eq:prob}
\end{align}
\end{spreadlines}
where $\mathbb{P}(i \rightarrow j \mid \text{virtual})$ is the probability of a virtual passenger to select station $j$ as its destination, and $\mathbb{P}(i \rightarrow j \mid \neg \text{virtual})$ is the probability of a real passenger to select station $j$ as its destination. One can then identify an autonomous MOD system with rebalancing (for the specific class of rebalancing policies discussed above) with an abstract queueing network with routing matrix and service rates given, respectively, by equations  \eqref{eq:rij} and \eqref{eq:muij}, where $p_{il}$ is replaced by $\tilde{p}_{il}$, $\lambda_i$ is replaced by $\tilde{\lambda}_i$, and $r_{ij}$ is replaced by $\tilde{r}_{ij}$.  In this way, the model is still a closed Jackson network model.  For notational convenience, we order $\gamma_i$ and $\pi_i$ (as defined in Section \ref{sec:back}) in such a way that the first $N$ components correspond to the $N$ stations (for example, $\gamma_i$ corresponds to station $i$, or the $i$th SS node, where $i = 1,2,...N$).

As already mentioned, in order to identify an autonomous MOD system with rebalancing with a Jackson queueing model, we restricted the class of rebalancing policies to open-loop, ``rebalancing promoting" policies. 
We will consider \emph{closed-loop} policies  in  Section \ref{sec:realtime}.

\subsection{Problem formulation}
Within our model, the optimization variables are the rebalancing rates  $\psi_i$ and $\alpha_{ij}$ of  the rebalancing promoting policies. One might wonder in the first place if and when rebalancing is even required. Indeed, one can easily obtain that, for the case \emph{without} rebalancing \cite{George2011},
$
 \lim_{m \rightarrow \infty} A_i(m) = \gamma_i/\gamma_S^{\text{max}}$, for all $i \in S$,
where $\gamma_i$ is the relative utilization of node $i\in S$ (see Section \ref{sec:back}) and $\gamma_S^{\text{max}} := \text{max}_{i \in S}\,  \gamma_i$. Hence, as $m$ approaches infinity, the set of stations $B := \{i \in S: \gamma_i = \gamma_S^{\text{max}}\}$ can have availability arbitrarily close to 1 while all other stations have availability \emph{strictly} less than 1 \emph{regardless} of $m$. In other words, without rebalancing, a MOD system will always experience customer losses no matter how many vehicles are employed!

The above discussion motivates the need for rebalancing. The tenet of our approach is to ensure, through rebalancing, that the network is (on average) in balance, i.e., $A_i(m)=A_j(m)$ for all $i,j \in S$ (or, equivalently, $\gamma_i = \gamma_j$ for all $i,j \in S$, as implied by equation  \eqref{eq:Ai}). 
The motivation behind this philosophy is twofold (i) it provides a natural notion of service fairness, and (ii) it fulfills the intuitive condition that as $m$ goes to $+\infty$ the availability of \emph{each station} goes to one (since in this case  $\gamma_i = \gamma_S^{\text{max}}$ for all $i$ in $S$). The objective then is to manipulate the rebalancing rates $\alpha_{ij}$ and $\psi_{i}$ such that all the $\gamma_i$'s in $S$ are equal while minimizing the number of rebalancing vehicles on the road. Note that the average number of rebalancing vehicles travelling between station nodes $i$ and $j$ is given by $T_{ij} \alpha_{ij} \psi_i$. The rebalancing problem we wish to solve is then as follows:
\begin{quote}{\bf Optimal Rebalancing Problem (ORP)}: Given an autonomous MOD system modeled as a closed Jackson network, solve
\begin{spreadlines}{-0.1em}
\begin{align}
&\underset{\psi_i, \alpha_{ij}}{\text{minimize}} && \sum_{i,j} T_{ij} \alpha_{ij} \psi_i \label{problem1} \\
&\text{subject to} && \gamma_i = \gamma_j  \notag  \\
& && \sum_j \alpha_{ij} = 1 \notag \\
& && \alpha_{ij} \geq 0, \, \, \psi_i \geq 0\qquad i,j\in \{1, \ldots, N\} \;\;\;\;\ \notag
\end{align}
\end{spreadlines}
where $\gamma_i = \frac{\pi_i}{\lambda_i + \psi_i}$ and $\pi_i$ satisfies equation \eqref{eq:traffic}. 
\end{quote}

Note that to solve the ORP one would need to explicitly compute the relative throughputs $\pi$'s using the traffic equation \eqref{eq:traffic}. This involves finding the 1-dimensional null space of a $\mathbb{R}^{N^2 \times N^2}$ matrix, which becomes computationally expensive as the number of stations become large. Furthermore, the objective function and the constraints $\gamma_i = \gamma_j$ are nonlinear in the optimization variables. 
In the next section we show how to reduce the dimension of the problem to $\mathbb{R}^N$ and how the ORP can be readily solved as a minimum cost flow problem. 

\section{Optimal Rebalancing} \label{sec:rebalancing}
\subsection{Optimal rebalancing}
In this section, we show how the ORP can be readily solved as a minimum cost flow problem. To this purpose, we first present two key lemmas, whose proofs are provided in the supplemental material. The first lemma shows how the traffic equations \eqref{eq:traffic} can be written only in terms of the SS nodes. 
\begin{lemma}[Folding of traffic equations]\label{lemma:folding}
Consider an autonomous MOD system modeled as a closed Jackson network as described in Section \ref{sec:network}. Then the relative throughputs $\pi$'s for the SS nodes can be found by solving the \emph{reduced} traffic equations
\begin{equation}
\vspace{-0.5em}
\pi_i = \sum_{k \in S} \pi_k \tilde{p}_{ki} \;\;\;\; \forall i \in S,
\label{eq:traffic2}
\end{equation}
where SS nodes are considered in isolation. The $\pi$'s for the IS nodes are then given by 
\vspace{-0.5em}
\begin{equation}
\pi_i = \pi_{\text{Parent}(i)} \tilde{p}_{\text{Parent}(i) \text{Child}(i)} \;\;\;\; \forall i \in I.
\label{eq:roadqueues}
\end{equation}
\end{lemma}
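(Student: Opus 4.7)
The plan is to specialize the general traffic equation $\pi_i = \sum_{j \in \mathcal{N}} \pi_j r_{ji}$ separately to the two classes of nodes (SS and IS), then eliminate the IS variables by substitution. Throughout I will use the augmented routing matrix $\tilde r_{ij}$ obtained by replacing $p_{il}$ with $\tilde p_{il}$ in the definition \eqref{eq:rij}.

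First I would consider any IS node $i \in I$ and ask which nodes $j$ can contribute a nonzero term $\tilde r_{ji}$. By the structure of \eqref{eq:rij}, the only nonzero incoming transitions into an IS node come from its unique parent SS node: specifically $\tilde r_{\text{Parent}(i), i} = \tilde p_{\text{Parent}(i),\text{Child}(i)}$, and all other $\tilde r_{ji}$ are zero (no IS-to-IS transitions exist, and any other SS $j \neq \text{Parent}(i)$ would not route into the road connecting $\text{Parent}(i)$ to $\text{Child}(i)$). The traffic equation for $i$ therefore collapses to a single term, which is exactly equation \eqref{eq:roadqueues}.

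Next I would apply the traffic equation to an SS node $i \in S$. Again inspecting \eqref{eq:rij}, the only contributions come from IS nodes $j$ with $\text{Child}(j) = i$, each with $\tilde r_{ji} = 1$. Thus
\begin{equation*}
\pi_i \;=\; \sum_{\substack{j \in I \\ \text{Child}(j) = i}} \pi_j.
\end{equation*}
Now I substitute the expression for $\pi_j$ obtained in the previous step. Each road node $j$ with $\text{Child}(j) = i$ is in bijection with its parent station $k = \text{Parent}(j) \in S$, so the sum reindexes over $k \in S$, yielding
\begin{equation*}
\pi_i \;=\; \sum_{k \in S} \pi_k\, \tilde p_{ki},
\end{equation*}
which is \eqref{eq:traffic2}.

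The argument is essentially a deterministic flow-through observation: IS nodes act as pass-through stages with in-degree one, so they carry no independent balance information and can be algebraically eliminated. I do not anticipate a real obstacle; the only care needed is to verify that (a) the bijection between IS nodes into $i$ and SS nodes in $S$ via the parent map is complete (which follows from the fact that each ordered pair $(k,i)$ with $k \neq i$ corresponds to exactly one road node), and (b) that self-loops are excluded consistently, which holds because $\tilde p_{ii} = \alpha_{ii} p_i + p_{ii}(1-p_i) = 0$ by the standing assumptions $\alpha_{ii} = p_{ii} = 0$, so no spurious terms appear in the folded equation.
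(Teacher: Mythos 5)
Your proof is correct and follows essentially the same route as the paper's: split the traffic equation by node class, observe that IS nodes have in-degree one (collapsing their balance equation to \eqref{eq:roadqueues}) and that SS nodes receive only from IS nodes with unit routing probability, then substitute to eliminate the IS variables and obtain \eqref{eq:traffic2}. Your added checks on the parent-map bijection and on $\tilde p_{ii}=0$ are sound but only make explicit details the paper leaves implicit.
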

\begin{lemma}\label{lemma:gamma}
For any rebalancing policy $\{\psi_i\}_i$ and $\{\alpha_{ij}\}_{ij}$, it holds for all $i\in S$
\begin{enumerate}
\item $\gamma_i >0$,
\item  $ (\lambda_i + \psi_i) \gamma_i = \sum_{j \in S} \gamma_j (\alpha_{ji} \psi_j + p_{ji} \lambda_j)$.
\end{enumerate}
\end{lemma}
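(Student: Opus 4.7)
The plan is to derive both statements directly from Lemma~\ref{lemma:folding} and the definition $\gamma_i = \pi_i/\mu_i(1)$, noting that for $i \in S$ the service rate is $\mu_i(1) = \tilde{\lambda}_i = \lambda_i + \psi_i$, which is strictly positive since $\lambda_i > 0$ by assumption. Thus the bridge between relative throughputs and relative utilizations on $S$ is simply $\pi_i = \gamma_i(\lambda_i + \psi_i)$.

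For part (2), I would substitute the expression \eqref{eq:prob} for $\tilde{p}_{ki}$ into the folded traffic equations \eqref{eq:traffic2}. Using the Bernoulli splitting relations \eqref{eq:Bernoulli}, namely $\psi_k = p_k\tilde{\lambda}_k$ and $\lambda_k = (1-p_k)\tilde{\lambda}_k$, one obtains the algebraic identity
\[
\tilde{\lambda}_k \, \tilde{p}_{ki} \;=\; \tilde{\lambda}_k \bigl(\alpha_{ki}p_k + p_{ki}(1-p_k)\bigr) \;=\; \alpha_{ki}\psi_k + p_{ki}\lambda_k.
\]
Replacing $\pi_k$ by $\gamma_k\tilde{\lambda}_k$ in \eqref{eq:traffic2} and grouping $\tilde{\lambda}_k$ with $\tilde{p}_{ki}$ yields
\[
(\lambda_i+\psi_i)\gamma_i \;=\; \pi_i \;=\; \sum_{k\in S}\pi_k\,\tilde{p}_{ki} \;=\; \sum_{k\in S}\gamma_k\bigl(\alpha_{ki}\psi_k + p_{ki}\lambda_k\bigr),
\]
which is exactly the asserted identity. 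This half is essentially a rewriting exercise.

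Part (1) requires strict positivity of $\pi_i$, and this is where the real work lies: the folded equation \eqref{eq:traffic2} only pins $\pi$ down up to scaling, so positivity must be extracted from the structure of the underlying Markov chain. I would view \eqref{eq:traffic2} as stating that $\pi$ restricted to $S$ is a nonnegative left-eigenvector, with eigenvalue $1$, of the matrix $\tilde{P} = [\tilde{p}_{ki}]$ (which is row-stochastic, since $\sum_i \tilde{p}_{ki} = p_k\sum_i\alpha_{ki} + (1-p_k)\sum_i p_{ki} = 1$). The main obstacle is ensuring that no admissible rebalancing policy can destroy the irreducibility inherited from $\{p_{ij}\}$: a priori the terms $\alpha_{ki}p_k$ and $p_{ki}(1-p_k)$ could conspire, through some degenerate choice of $p_k = 1$, to zero out entries.

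To rule this out I would invoke the standing assumption $\lambda_k > 0$, which forces $p_k = \psi_k/(\lambda_k+\psi_k) < 1$ for every $k \in S$. Consequently,
\[
\tilde{p}_{ki} \;=\; \alpha_{ki}p_k + p_{ki}(1-p_k) \;\geq\; p_{ki}(1-p_k),
\]
so $\tilde{p}_{ki} > 0$ whenever $p_{ki} > 0$. Hence $\tilde{P}$ inherits the irreducibility of $\{p_{ij}\}$, and Perron--Frobenius guarantees that every nonnegative left-eigenvector associated with the Perron eigenvalue $1$ has strictly positive entries. Combined with $\lambda_i+\psi_i > 0$, this yields $\gamma_i = \pi_i/(\lambda_i+\psi_i) > 0$ and completes the proof of (1).
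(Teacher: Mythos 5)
Your proof is correct and follows essentially the same route as the paper's: part (2) is the same substitution of \eqref{eq:prob} and \eqref{eq:Bernoulli} into the folded traffic equations \eqref{eq:traffic2} followed by the identification $\pi_k/(\lambda_k+\psi_k)=\gamma_k$, and part (1) is the same irreducibility-plus-Perron--Frobenius argument. The only difference is that you explicitly verify that $\{\tilde{p}_{ij}\}_{ij}$ inherits irreducibility from $\{p_{ij}\}_{ij}$ (via $p_k<1$, forced by $\lambda_k>0$), a step the paper asserts without justification.
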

The next theorem (which represents the main result of this section) shows that we can \emph{always} solve problem ORP by solving a low dimensional linear optimization problem.
\begin{theorem}[Solution to problem ORP]\label{theo:main}
Consider the  linear optimization problem
\begin{spreadlines}{0.05em}
\begin{align}
&\underset{\beta_{ij}}{\emph{minimize}} && \sum_{i,j} T_{ij} \beta_{ij} \label{eq:opt2}\\
&\emph{subject to} && \sum_{j \neq i} (\beta_{ij} - \beta_{ji}) = -\lambda_i + \sum_{j \neq i} p_{ji} \lambda_j \notag \\
& && \beta_{ij} \geq 0 \notag
\end{align}
\end{spreadlines}
The optimization problem \eqref{eq:opt2} is always feasible. Let  $\{\beta_{ij}^*\}_{ij}$ denote an optimal solution.
By setting $\psi_i  = \sum_{j\neq i} \beta^*_{ij}$, $\alpha_{ii}=0$, and, for $j\neq i$,
\vspace{-0.5em}
\[
\alpha_{ij} =
\begin{cases}
\beta^*_{ij}/\psi_i& \textrm{\emph{if} } \psi_i > 0 \\ % \mathcal{S} is a state, not a set
1/{(N-1)} & \textrm{\emph{otherwise}},\\
\end{cases}
\]
\vspace{-0.5em}
one obtains an optimal solution to problem ORP.
\end{theorem}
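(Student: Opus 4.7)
The plan is to recognize that $(\psi_i,\alpha_{ij})$ enter both the ORP objective and the identity of Lemma~\ref{lemma:gamma}(2) only through the products $\beta_{ij}:=\alpha_{ij}\psi_i$, and to build a bijection in objective value between ORP-feasible tuples and feasible points of the LP~\eqref{eq:opt2}. Concretely, I would (i) check that \eqref{eq:opt2} is always feasible, (ii) show that every ORP-feasible solution induces an LP-feasible $\beta$ with the same cost, and (iii) show that the construction stated in the theorem inverts this map, producing an ORP-feasible $(\psi,\alpha)$ with the same cost as a given LP-feasible $\beta$. Once those three are in place, optimality transfers automatically.

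Feasibility of \eqref{eq:opt2} is pure flow balance. Writing $b_i:=-\lambda_i+\sum_{j\neq i}p_{ji}\lambda_j$, one has $\sum_i b_i=0$ because $\{p_{ij}\}$ is stochastic with $p_{ii}=0$. An explicit feasible $\beta$ is obtained by routing everything through a single hub node $k$: set $\beta_{ik}=\max(b_i,0)$ and $\beta_{ki}=\max(-b_i,0)$ for $i\neq k$, and zero otherwise. Conservation at $i\neq k$ holds by construction, and at $i=k$ it follows from $\sum_i b_i=0$.

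For the forward direction, assume $(\psi,\alpha)$ is ORP-feasible, so $\gamma_i\equiv\gamma>0$ by Lemma~\ref{lemma:gamma}(1). Substituting into Lemma~\ref{lemma:gamma}(2) and dividing by $\gamma$ yields $\lambda_i+\psi_i=\sum_{j}(\alpha_{ji}\psi_j+p_{ji}\lambda_j)$; using $\sum_{j\neq i}\alpha_{ij}=1$ to expand $\psi_i=\sum_{j\neq i}\alpha_{ij}\psi_i$ rearranges this into the LP conservation constraint for $\beta_{ij}:=\alpha_{ij}\psi_i$, with a term-by-term match of the objective. For the reverse direction, the prescribed construction gives $\alpha_{ij}\psi_i=\beta^*_{ij}$ in all cases (the uniform branch activates precisely when $\psi_i=0$, where both sides vanish), together with $\sum_j\alpha_{ij}=1$ and nonnegativity. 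Reading the previous algebra backwards, the LP constraint is equivalent to saying that the constant vector $\gamma_i\equiv 1$ satisfies the identity of Lemma~\ref{lemma:gamma}(2); and, by the computation that links $\pi_k\tilde p_{ki}$ to $\gamma_k(\alpha_{ki}\psi_k+p_{ki}\lambda_k)$, this identity is exactly the folded traffic equation of Lemma~\ref{lemma:folding}. Its solution is unique up to a positive scalar because $\{\tilde p_{ij}\}$ inherits irreducibility from $\{p_{ij}\}$ via $\tilde p_{ij}\ge (1-p_i)p_{ij}$, with $p_i<1$ since $\lambda_i>0$. Hence the \emph{actual} $\gamma_i$'s, which also satisfy it, must all be equal, so $(\psi,\alpha)$ is ORP-feasible with the same cost as $\beta^*$, and optimality of $\beta^*$ transfers.

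The main obstacle I expect is exactly this reverse step: purely algebraic satisfaction of the LP balance equation does not, by itself, pin down the actual relative utilizations of the Jackson network. What closes the loop is the identification of Lemma~\ref{lemma:gamma}(2) with the folded traffic equation of Lemma~\ref{lemma:folding}, combined with the irreducibility argument to get uniqueness up to scaling. Without that identification, one could only conclude that \emph{some} constant vector solves the system, not that the network's \emph{own} $\gamma_i$'s are constant.
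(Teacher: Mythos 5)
Your proposal is correct and follows essentially the same route as the paper: feasibility via flow balance, reduction of the ORP to a linear flow problem through the substitution $\beta_{ij}=\alpha_{ij}\psi_i$, and equivalence of the flow-conservation constraint with $\gamma_i=\gamma_j$ via Lemma~\ref{lemma:gamma} and a Perron--Frobenius uniqueness argument. The only cosmetic difference is that you apply Perron--Frobenius to the irreducible chain $\{\tilde p_{ij}\}_{ij}$ through the folded traffic equations, whereas the paper builds an auxiliary irreducible row-stochastic matrix $Z$ from $\zeta_{ij}=\varphi_{ij}/\sum_j\varphi_{ij}$ and uses the one-dimensionality of its unit eigenspace; both arguments rest on the same irreducibility hypothesis and deliver the same conclusion.
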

\begin{proof}
First, we note that problem \eqref{eq:opt2} is an uncapacitated minimum cost flow problem and thus is always feasible. Consider an optimal solution to problem \eqref{eq:opt2}, $\{\beta^*_{ij}\}_{ij}$, and set $ \{\psi_i\}_i$ and $\{\alpha_{ij}\}_{ij}$ as in the statement of the theorem.
We want to show that, with this choice, $\{\psi_i\}_i$ and $\{\alpha_{ij}\}_{ij}$ represent an optimal solution to the ORP. Since $\{\beta^*_{ij}\}_{ij}$ is an optimal solution to problem \eqref{eq:opt2}, then one easily concludes that $\{\psi_i\}_i$ and $\{\alpha_{ij}\}_{ij}$ are an optimal solution to problem 
\vspace{-0.5em}
\begin{spreadlines}{-0.1em}
\begin{align}
&\underset{\psi_i, \alpha_{ij}}{\text{minimize}} && \sum_{i,j} T_{ij} \alpha_{ij}\psi_i \label{eq:opt3}\\
&\text{subject to} &&\lambda_i + \psi_i = \sum_{j} \alpha_{ji} \psi_j + p_{ji} \lambda_j \notag \\
& &&\sum_j \alpha_{ij} = 1 \notag\\
& && \alpha_{ij} \geq 0, \quad \psi_{i} \geq 0 \notag
\end{align}
\end{spreadlines}
The objective is now to show that the constraint
\vspace{-0.5em}
\begin{equation}\label{eq:con1}
\lambda_i + \psi_i = \sum_{j} \alpha_{ji} \psi_j + p_{ji} \lambda_j
\vspace{-0.5em}
\end{equation}
is equivalent to the constraint
\vspace{-0.5em}
\begin{equation}\label{eq:con2}
\gamma_i = \gamma_j.
\vspace{-0.5em}
\end{equation}
Consider, first, the case where the $\{\alpha_{ij}\}_{ij}$ and $\{\psi_i\}_i$ satisfy constraint \eqref{eq:con1}. Then, considering Lemma \ref{lemma:gamma}, one can write, for all $i$,
\vspace{-0.5em}
\begin{equation}\label{eq:inter}
\bigl( \sum_{j} \alpha_{ji} \psi_j + p_{ji} \lambda_j\Bigr) \gamma_i = \sum_{j \in S} \gamma_j (\alpha_{ji} \psi_j + p_{ji} \lambda_j).
\vspace{-0.5em}
\end{equation}
Let $\varphi_{ij}:= \alpha_{ji} \psi_j + p_{ji} \lambda_j$ and $\zeta_{ij}:= \varphi_{ij}/\sum_j \varphi_{ij}$. (Note that $\sum_j \varphi_{ij} = \lambda_i + \psi_i>0$ as $\lambda_i>0$ by assumption.) Since $\alpha_{ii} =0$ and $p_{ii}=0$, one has $\zeta_{ii} = 0$. The variables $\{\zeta_{ij}\}_{ij}$'s can be considered as transition probabilities of an \emph{irreducible} Markov chain (since, by assumption, the probabilities  $\{p_{ij}\}_{ij}$ constitute an irreducible Markov chain). Then, one can rewrite equation \eqref{eq:inter} as $\gamma_i = \sum_j \, \gamma_j \, \zeta_{ij}$, which can be rewritten in compact form as $Z\, \gamma = \gamma$, where $\gamma = (\gamma_1, \ldots, \gamma_N)^{T}$ and $Z$ is an irreducible, row stochastic matrix whose $i$th row is given by $[  \zeta_{i1}, \zeta_{i2}, \ldots, \zeta_{i \, i-1}, 0, \zeta_{i \, i+1} \ldots \zeta_{ i N}]$, where $i=1,2,\ldots, N$.
Since $Z$ is an irreducible, row stochastic matrix, by the Perron-Frobenius theorem \cite[p.673]{Meyer2000}, the eigenspace associated with the eigenvalue equal to 1 is one-dimensional, which implies that the equation $Z\, \gamma = \gamma$ has a unique solution given by $\gamma = (1,\ldots, 1)^T$, up to a scaling factor. This shows that $\gamma_i = \gamma_j$ for all $i, j$.   Conversely, assume that  $\{\alpha_{ij}\}_{ij}$ and $\{\psi_i\}_i$ satisfy constraint \eqref{eq:con2}. Considering Lemma \ref{lemma:gamma} (note, in particular, that $\gamma_i>0$), since $\gamma_i = \gamma_j$ for all $i,j$, then one immediately obtains that $\{\alpha_{ij}\}_{ij}$ and $\{\psi_i\}_i$ satisfy constraint \eqref{eq:con1}.
Hence, we can equivalently restate problem \eqref{eq:opt3} as problem \eqref{problem1},
which proves the claim.
\end{proof}
Remarkably, problem \eqref{eq:opt2} has the same form as the linear optimization problem in \cite{Pavone.ea:IJRR12}  used to find rebalancing policies within a \emph{fluidic} model of an autonomous MOD system. In this respect, the analysis of our paper provides a \emph{theoretical foundation} for the fluidic approximation performed in \cite{Pavone.ea:IJRR12}.

The importance of theorem \ref{theo:main} is twofold: it allows us to efficiently find an optimal open-loop, rebalancing promoting policy, and it enables the computation of quality of service metrics (namely, vehicle availability) for autonomous MOD systems as shown next.

\subsection{Computation of performance metrics}\label{subsec:metrics}

By leveraging Theorem \ref{theo:main}, one can readily compute performance metrics (i.e. vehicle availability) for an autonomous MOD system. First, we compute an optimal solution to the ORP using Theorem \ref{theo:main}, which involves solving a linear optimization problem with $N^2$ variables. Next, we compute the relative throughputs $\pi$'s using Lemma \ref{lemma:folding}. 
Finally, we apply a well-known technique called mean value analysis (MVA) \cite{Reiser1980} in order to avoid the explicit computation of the normalization constant in equation \eqref{eq:Ai}, which is prohibitively expensive for large numbers of vehicles and stations. The MVA algorithm is an iterative algorithm to calculate the mean waiting times $W_i(n)$ and the mean queue lengths $L_i(n)$ at each node $i$ of a closed separable system of queues, where $n = 1,2,\ldots$ is the numbers of customers over which the algorithm iterates. For the Jackson model in Section \ref{sec:network}, subject to the initial conditions $W_i(0) = L_i(0) = 0$, the equations for MVA read as (note that in our case, the customers of the abstract queueing systems are the vehicles, whose total number is $m$):
\begin{itemize}
\item $W_i(n) = \frac{1}{\mu_i(1)} = T_{\text{Parent}(i)\, \text{Child}(i)}\;\;\;\; \forall i \in I$,
\item  $W_i(n) = \frac{1}{\mu_i} (1 + L_i(n - 1)) = \frac{1}{\tilde{\lambda}_i} (1+ L_i(n-1)) \;\;\;\; \forall i \in S$,
\item $L_i(n) = \frac{n \pi_i W_i(n)}{\sum_{j \in \mathcal{N}} \pi_j W_j(n)} \;\;\;\; \forall i \in \mathcal{N}$,
\end{itemize}
where $n$ ranges from 1 to $m$. 

Finally, the throughput (or mean arrival rate) to each station is given by Little's theorem \cite[p.152]{Bertsekas1992}: $\Lambda_i(m) = L_i(m)/W_i(m)$ for all $i \in S$. Combining equations \eqref{eq:gamma}, \eqref{eq:throughput} and \eqref{eq:Ai}, one readily obtains the availability at each station as $A_i(m) = \Lambda_i(m)/\tilde{\lambda}_i$.

This procedure scales well to a large number of stations and vehicles, and is applied in Section \ref{sec:fleet} to real-world settings involving hundreds of stations and thousands of vehicles, to assess the potential performance of an autonomous MOD system in New York City. 

The rebalancing promoting policy considered so far, while providing useful insights into the performance and operations of an autonomous MOD system, is ultimately an open-loop policy and hence of limited applicability.  In the next section, we use insights gained from the ORP to formulate a \emph{closed-loop} rebalancing policy for the robotic vehicles that appears to perform well in practice.

\subsection{Real-time rebalancing policy} \label{sec:realtime}
In this section, we introduce a practical real-time rebalancing policy that can be implemented on real autonomous MOD systems. In reality, customers arriving at a station would wait in line rather than leave the system immediately (as in the loss model) if a vehicle is not available. In the mean time, information could be collected about the customer's destination and used in the rebalancing process. Let $v^{\text{own}}_i(t)$ be the number of vehicles ``owned'' by station $i$, that is, vehicles that are at station $i$, on their way to station $i$, or will be on their way to station $i$. We can write
$v^{\text{own}}_i(t) = v_i(t) + \sum_j v_{ji}(t) + \sum_j \tilde{c}_{ji}(t)$, where $v_i(t)$ is the number of vehicles at station $i$, $v_{ji}(t)$ is the number of vehicles enroute from station $j$ to $i$, and $\tilde{c}_{ji}$ is the number of passengers at station $j$ that are about to board an available vehicle to station $i$. Note that $\sum_i \tilde{c}_{ji}(t) \leq v_j(t)$. Let $v^e_i(t):= v^{\text{own}}_i(t) - c_i(t)$ be the number of excess vehicles there will be at station $i$, where $c_i(t)$ is the number of customers at station $i$. 
%m + \sum_i \sum_j \tilde{c}_{ji}(t) - \sum_i c_i(t) = 
The total number of excess vehicles is given by $\sum_i v^e_i(t) = \sum_i \Bigl( v_i(t) + \sum_j v_{ji}(t) + \sum_j \tilde{c}_{ji}(t) - c_i(t) \Bigr) =
 m + \sum_i \min\{v_i(t), c_i(t)\} - \sum_i c_i(t) = m - \sum_i \max\{c_i(t) - v_i(t), 0\}$. The second equality replaces $v_i(t) + \sum_i v_{ji}(t)$ with the total number of vehicles and asserts that in the current time step, either all of the customers or all the vehicles will leave the station. The last equality is obtained by considering both cases when $c_i(t) \geq v_i(t)$ and when $c_i(t) < v_i(t)$.

Through rebalancing, we may wish to distribute these excess vehicles evenly between all the stations, in which case each station will have no less than $v^d_i(t)$ vehicles, given by
$v^d_i(t) = \left \lfloor{(m - \sum_i \max\{c_i(t)-v_i(t),0\})/N}\right \rfloor.$
Accordingly, every $t_{\text{hor}} > 0$ time periods, the number of vehicles to rebalance from station $i$ to $j$, $\text{num}_{ij}$, is computed by solving the linear integer optimization problem
\begin{spreadlines}{-0.1em}
\begin{align*}
&\underset{\text{num}_{ij}}{\text{minimize}} && \sum_{i,j} T_{ij} \text{num}_{ij} \\
& \text{subject to} && v^e_i(t) + \sum_{j \neq i} (\text{num}_{ji} - \text{num}_{ij} ) \geq v^d_i(t) & \forall i \in S \\
& && \text{num}_{ij} \in \naturals \;\;\;\;\;\;\;\;\;\; \forall i,j \in S, i \neq j
\end{align*}
\end{spreadlines}
This rebalancing policy takes all the current information known about the system and sets the rebalancing rates (in this case, the number of rebalancing vehicles) so that excess vehicles are distributed evenly to all the stations. This is in part inspired by the optimization problem in Theorem \ref{theo:main}.  It can be shown that the constraint matrix is totally unimodular, and the problem can be solved as a linear program, as the resulting solution will be necessarily  integer-valued \cite[Section 5]{Pavone.ea:IJRR12}. The rebalancing policy presented here is closely related to the one presented in \cite{Pavone.ea:IJRR12}, the main difference being the inclusion of the current customers in line within the optimization process.

The real-time rebalancing policy will be used in section \ref{sec:fleet} to validate the vehicle availability performance criterion. 

\section{Case Study: Autonomous MOD in Manhattan} \label{sec:fleet}
In this section we apply our availability analysis using the loss model to see how many robotic vehicles in an autonomous MOD system would be required to replace the current fleet of taxis in Manhattan while providing quality service at current customer demand levels. 
In 2012, over 13,300
taxis in New York City make over 15 million trips a month or 500,000 trips a day, with around 85\% of trips within Manhattan. Our study used taxi trip data collected on March 1, 2012\footnote{The data is courtesy of the  New York City Taxi \& Limousine Commission.}
consisting of 439,950 trips within Manhattan. First, trip origins and destinations are clustered into $N = 100$ stations throughout the city using \emph{k}-means clustering. The resulting locations of the stations are such that a demand is on average less than $300$m from the nearest station, or approximately a 3-minute walk. The system parameters $\lambda_i$, $p_{ij}$, and $T_{ij}$ are estimated for each hour of the day using trip data between each pair of stations with Laplace smoothing. Some congestion effects are implicitly taken into account in the computation of $T_{ij}$, which uses the Manhattan distance and an average speed estimated from the data. 

Vehicle availability is calculated for 3 cases - peak demand (29,485 demands/hour, 7-8pm), low demand (1,982 demands/hour, 4-5am), and average demand (16,930 demands/hour, 4-5pm). For each case, vehicle availability is calculated as a function of the fleet size using MVA techniques. The results are summarized in Figure \ref{fig:availabilityPlot}.
\begin{figure}[h]
\centering
\vspace{-1.5em}
		\subfigure[]{\label{fig:availabilityPlot}	\includegraphics[width=0.24\textwidth]{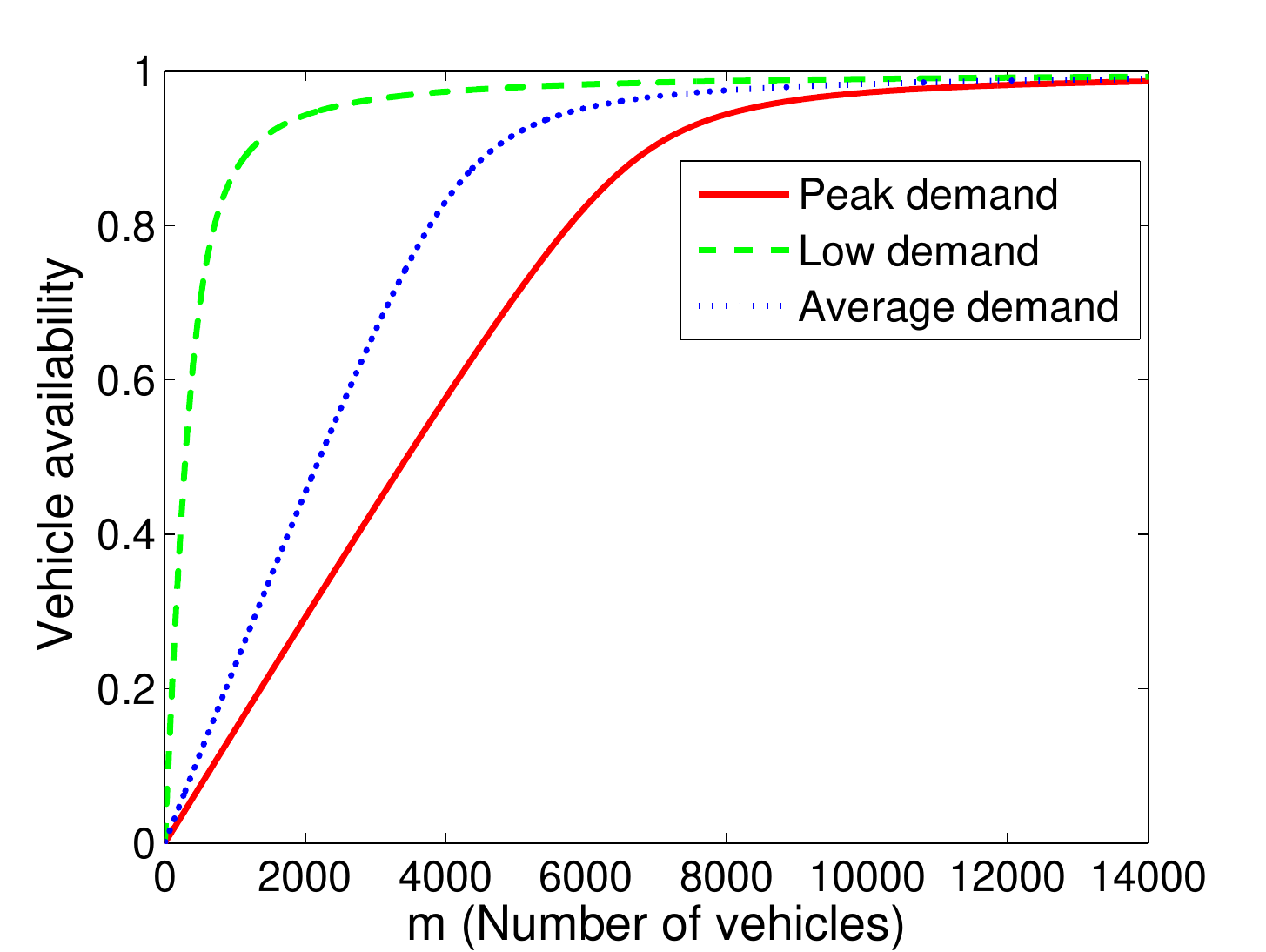}} 
		\hspace{-1em}
		\subfigure[]{\label{fig:waitTimes}
\includegraphics[width=.24\textwidth]{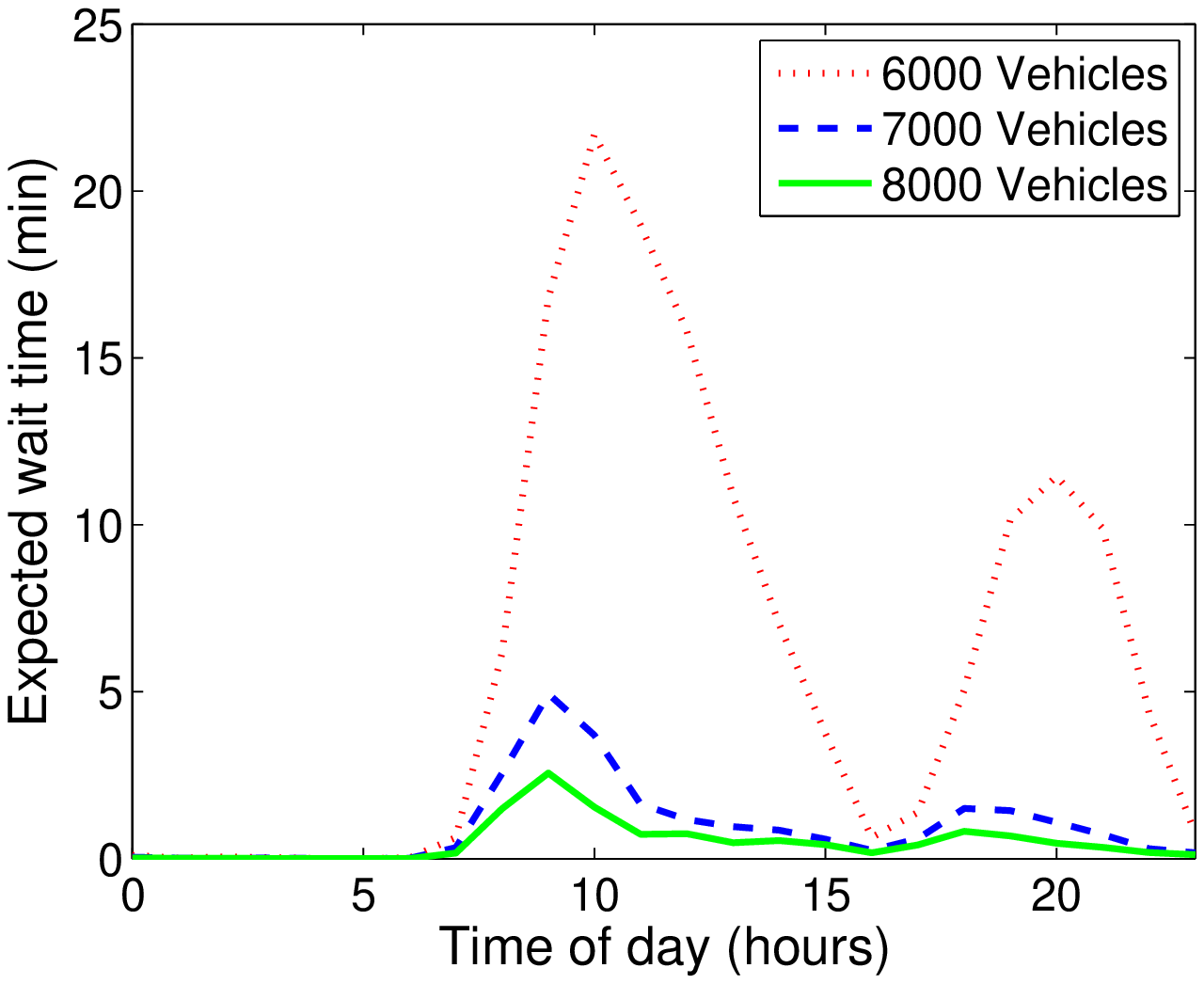}}
\vspace{-1em}
\caption{\ref{fig:availabilityPlot}: vehicle availability as a function of system size for 100 stations in Manhattan. Availability is calculated for peak demand (7-8pm), low demand (4-5am), and average demand (4-5pm). \ref{fig:waitTimes}: Average customer wait times over the course of a day, for systems of different sizes.}
\label{fig:availabilityResults}
\vspace{-0.5em}
\end{figure}

For high vehicle availability (say, 95\%), we would need around 8,000 vehicles ($\sim$60\% of the current fleet size) at peak demand and 6,000 vehicles at average demand.
This suggests that an autonomous MOD system with 8,000 vehicles would be able to meet 95\% of the taxi demand in Manhattan, assuming 5\% of passengers are impatient and are lost when a vehicle is not immediately available. However, in a real system, passengers would wait in line for the next vehicle rather than leave the system, thus it is important to determine how vehicle availability relates to customer waiting times. We characterize the customer waiting times through simulation, using the real-time rebalancing policy described in Section \ref{sec:realtime}. Figure \ref{fig:simulation} shows a snapshot of the simulation environment with 100 stations and 8,000 vehicles. Simulation are performed with discrete time steps of 2 seconds and a simulation time of 24 hours. The time-varying system parameters $\lambda_i$, $p_{ij}$, and average speed are piecewise constant, and change each hour based on values estimated from the taxi data. Travel times $T_{ij}$ are based on average speed and Manhattan distance between $i$ and $j$, and rebalancing is performed every 15 minutes. Three sets of simulations are performed for 6,000, 7,000, and 8,000 vehicles, and the resulting average waiting times are shown in Figure \ref{fig:waitTimes}.  

\begin{figure}[h]
\centering
\includegraphics[width=.33\textwidth]{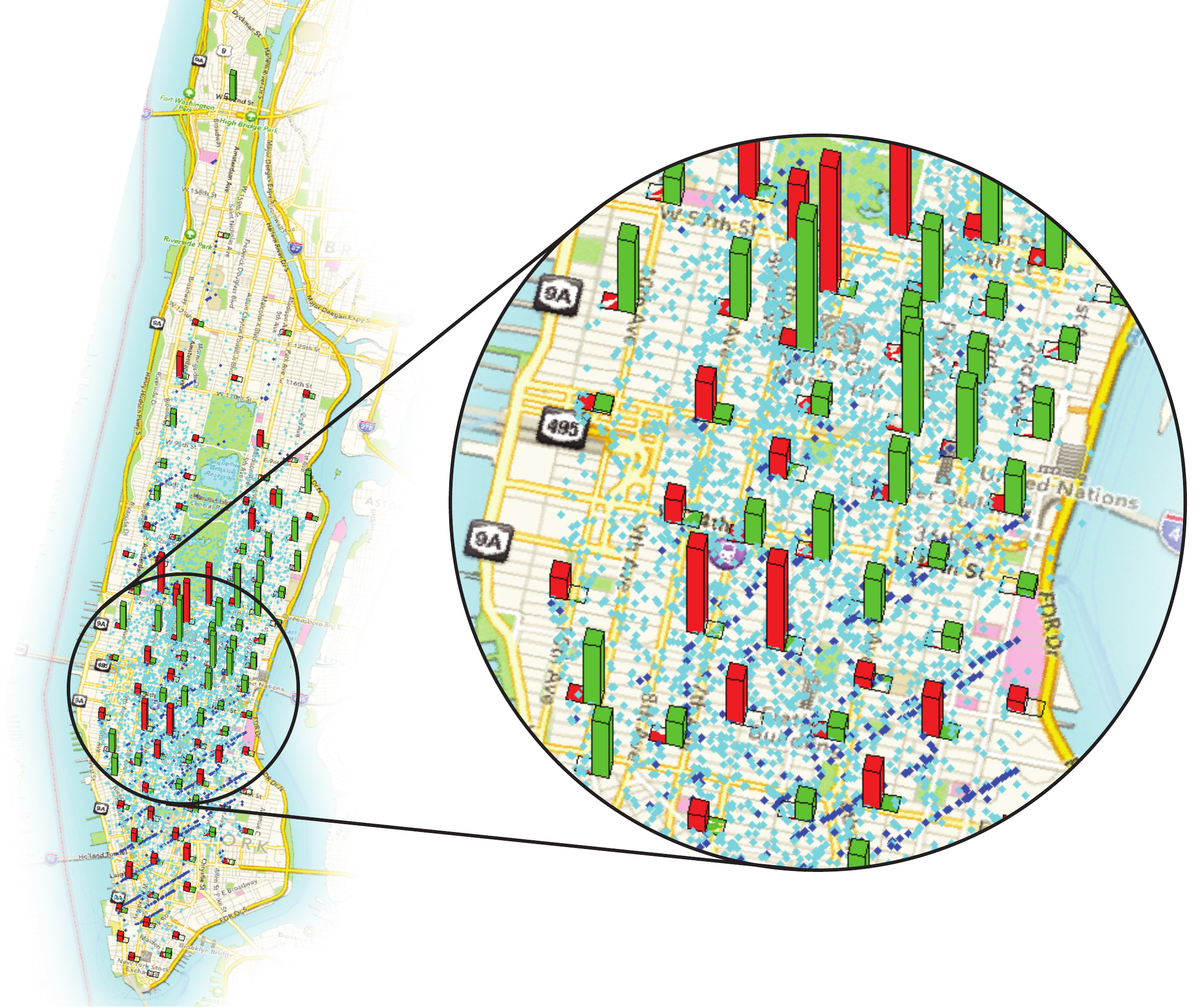}
\caption{Simulation environment with 100 stations in Manhattan. Red bars indicate waiting customers, green bars indicate available vehicles, cyan dots are vehicles travelling with passengers and blue dots are rebalancing vehicles.}
\label{fig:simulation}
\vspace{-1.5em}
\end{figure}

Figure \ref{fig:waitTimes} shows that for a 7,000 vehicle fleet, the peak averaged wait time is less than 5 minutes (9-10am) and for 8,000 vehicles, the average wait time is only 2.5 minutes. The simulation results show that high availability (90-95\%) does indeed correspond to low customer waiting time and that a autonomous MOD system with 7,000 to 8,000 vehicles (50-60\% of the size of the current taxi fleet) can provide adequate service with current taxi demand levels in Manhattan. 
\vspace{-.3em}
\section{A Mean Value Analysis Approach to the Analysis of Congestion Effects}\label{sec:cong}
The queueing model described in Section \ref{sec:model} does not consider congestion effects (roads are modeled as \emph{infinite} server queues, so the travel time for each vehicle is independent of all other vehicles).
However, if too many rebalancing vehicles travel on a route that is already congested, they can cause a traffic jam and decrease throughput in the entire system. Hence, in some scenarios, adding robotic vehicles to improve the quality of service might indeed have the opposite effect.

In this section, we propose an approach to study congestion effects that leverages our queueing-theoretical setup. The key idea is to change the infinite server road queues to queues with a \emph{finite} number of servers, where the number of servers on each road represents the \emph{capacity} of that road. This road congestion model is similar to ``vertical queueing'' models that have been used in congestion analysis for stop-controlled intersections \cite{Madanat1994} and for traffic assignment \cite{Huang2002}. In traditional traffic flow theory \cite{Lieu2003}, the flow rate of traffic increases with the density of vehicles up to a critical value at which point the flow decreases, marking the beginning of a traffic jam. 
By letting the number of servers represent the critical density of the road, the queueing model becomes a good model for traffic flow up to the point of congestion.

Remarkably, the Jackson network model presented in Section \ref{sec:model}  can be extended to the case where roads are modeled as finite-server queues; furthermore, the results presented in Section  \ref{sec:back} are equally valid. However, the travel times are no longer simply equal to the inverse of the service rates of the road queues, which significantly complicates the formulation of an analogue of problem ORP. While the issue of finding optimal rebalancing policies in the presence of congestion effects is left for future research, in this paper we show how \emph{given} a rebalancing policy one can compute performance metrics such as vehicle availability (for example, one can study the effects of congestion on the performance of the rebalancing policies considered in Section \ref{sec:rebalancing}). 

In our approach, we first model the road network as an abstract queueing network with finite-server road queues, then we apply an extended version of the MVA algorithm for finite-server queues, the details of which is presented in \cite{Reiser1980}.

\subsection{Mapping physical roads into finite-server road queues} \label{sec:mapping}
The main difficulty in mapping the capacities of the road network into the number of servers of the queueing model (or ``virtual'' capacities, denoted by $m_{ij}$) is that trips from different origins and destinations may share the \emph{same} physical road.

\begin{figure}[htbp]
\centering
\vspace{-1em}
\includegraphics[width=.25\textwidth]{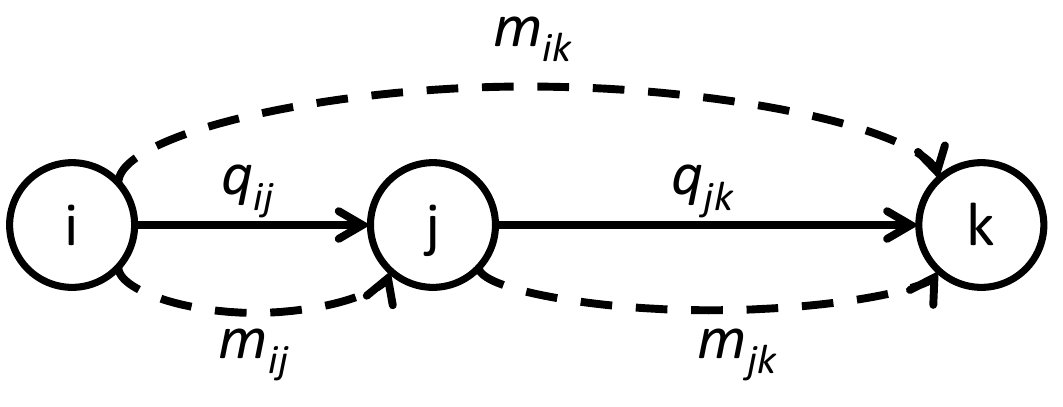}
\vspace{-1em}
\caption{A simple 3-station example showing the procedure of mapping physical roads  into finite-server road queues.}
\label{fig:threeStation}
\vspace{-0.5em}
\end{figure}
As a simple example, consider the 3-station network shown in Figure \ref{fig:threeStation}. Let $q_{ij}$ represent the maximum number of vehicles that can travel on the road between station $i$ and station $j$ without causing significant congestion. $m_{ij}$, the number of servers between $i$ and $j$, represents the number of vehicles that can travel between $i$ and $j$ before delays occur due to queueing.
In the simple network, to go from station $i$ to station $k$, one must pass through station $j$. Hence, one has the following consistency constraints
\vspace{-0.5em}
\begin{align}
m_{ij} + m_{ik}, \leq q_{ij}, \quad m_{jk} + m_{ik} \leq q_{jk}.
\label{eq:mconstraints}
\vspace{-0.5em}
\end{align} 
To maximize the overall road usage, we can define a quadratic objective that seeks to minimize the difference between the real road capacities and the sum of the virtual road capacities:
\vspace{-0.5em}
\begin{equation*}
\min_{m_{ij}, m_{jk}, m_{ik}} (m_{ij} + m_{ik} - q_{ij})^2 + (m_{jk} + m_{ik} - q_{jk})^2
\vspace{-0.5em}
\end{equation*}
However, this optimization problem, along with the constraints \eqref{eq:mconstraints}, does not yield a unique solution because nothing is assumed about the relative usage rates of the road queues. If relative road usage is known, the $m_{ij}$'s can be assigned proportional to the amount of traffic between each pair of stations that use the road. Let $\pi_{ij}$ be the relative throughput of the road queue between station $i$ and $j$, consistent with the earlier definition. \emph{Heuristically}, the throughputs $\{\pi_{ij}\}_{ij}$ may be obtained from the arrival rates and travel patterns of passengers or from the analysis of a given rebalancing policy assuming no congestion (according to the procedure discussed in Section \ref{subsec:metrics}). For the simple example, one can write 
\begin{align}
m_{ik} \leq \frac{q_{ij} \pi_{ik}}{\pi_{ik} + \pi_{ij}}, \quad m_{ik} \leq \frac{q_{jk} \pi_{ik}}{\pi_{ik} + \pi_{jk}}.
\label{eq:mconstraints2}
\end{align}
Similar constraints can be written for $m_{ij}$ and $m_{jk}$ so that \eqref{eq:mconstraints} is satisfied. 

For a general road network, let $B_{ij}$ be the set of possible non-cyclic paths from station $i$ to $j$ (assuming no back tracking) and $C_{b_{ij}}$ be the set of road segments along path $b_{ij} \in B_{ij}$. The number of possible paths from $i$ to $j$ is given by $\lvert B_{ij} \rvert$. Let $a_{ij}^c$ denote the fraction of trips from $i$ to $j$ that go through road segment $c = \{\text{origin}, \text{destination}\}$, where $c \in C_{b_{ij}}$. Denote by $q_c$ the capacity of road segment $c$. 
For trips going through multiple road segments, the virtual road capacity is determined by the segment with the lowest capacity. One can then consider as virtual road capacities:
\begin{equation*}
m_{ij} = \sum_{b \in B_{ij}} \min_{c \in C_{b_{ij}}} \Big \{\frac{q_c a_{ij}^{c} \pi_{ij}}{\sum_{k,l, \text{s.t.} c \in C_{b_{kl}} } a_{kl}^{c} \pi_{kl}} \Big \}.
\label{eq:mij}
\end{equation*}

In the next section we will apply this approach to study congestion effects for autonomous MOD systems on a very simple transportation network. 

\subsection{Numerical study of congestion effects}\label{sec:simCon}
In this section we use a simple 9-station road network (shown in Figure \ref{fig:congestion}) to illustrate the impact of rebalancing vehicles on congestion.
\begin{figure}[htbp]
\begin{minipage}{.22\textwidth}
  \centering
  \begin{subfigure}{}
    \centering
    \includegraphics[width=.65\linewidth]{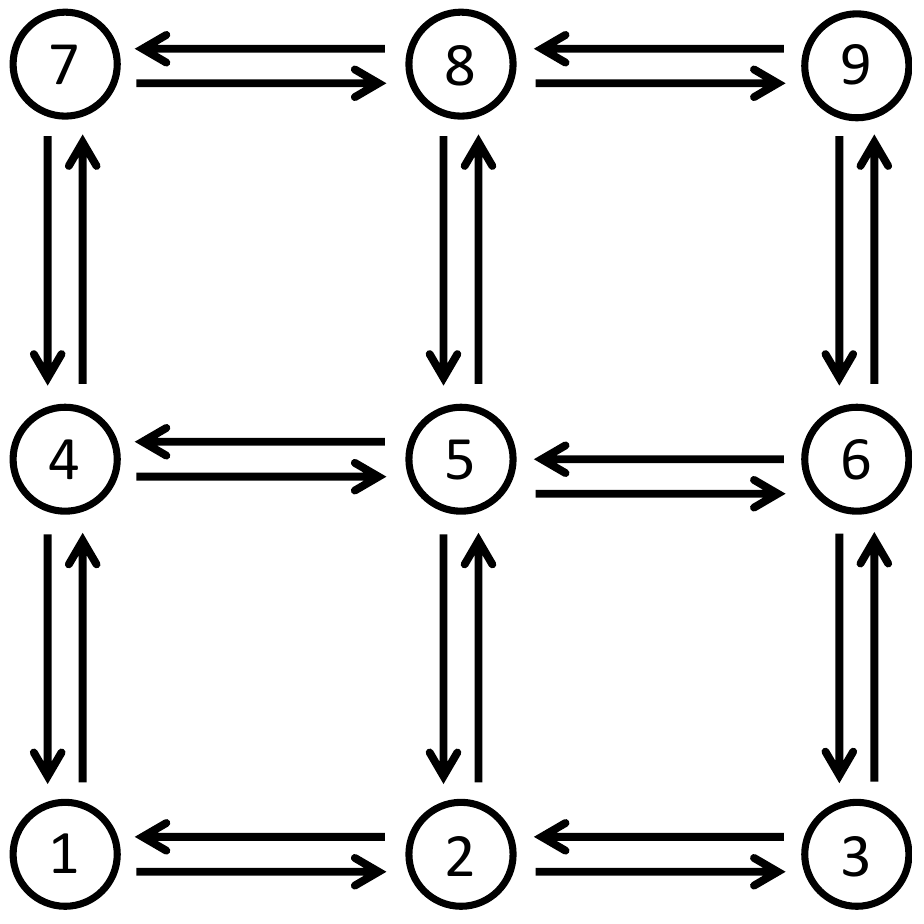}
    %\label{fig:stations9}
  \end{subfigure} \\
  \begin{subfigure}{}
      \centering
    \includegraphics[width=.4\linewidth]{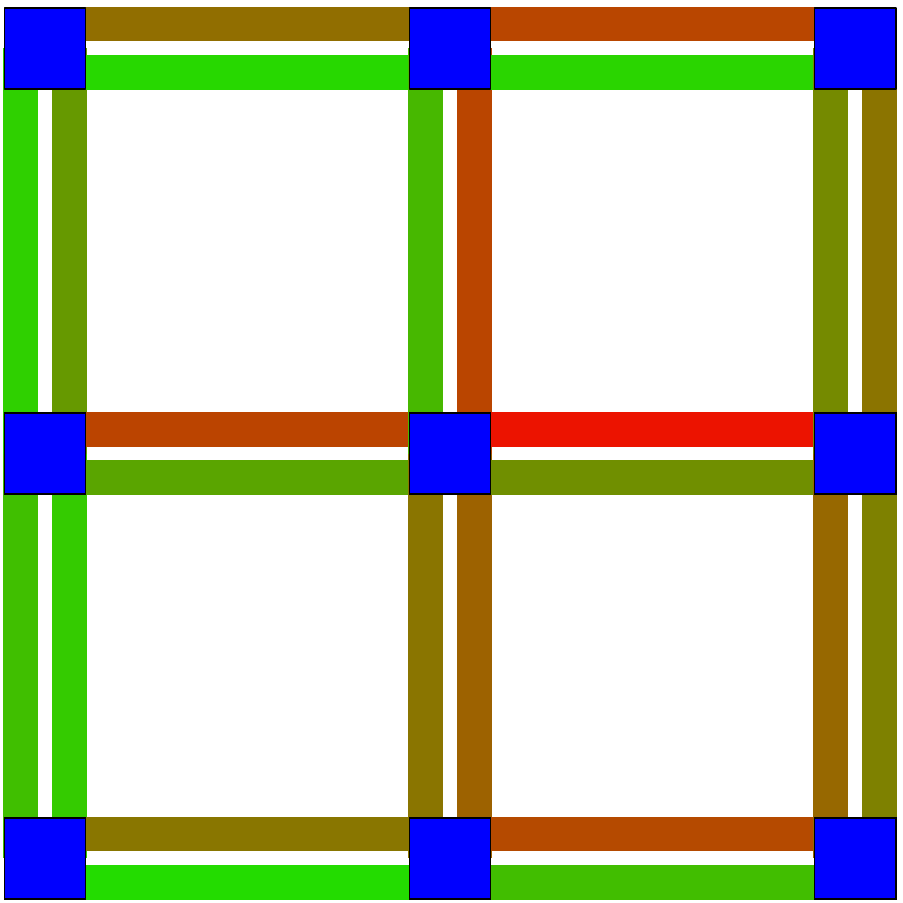}
    %\label{fig:norebalance}
  \end{subfigure} 
  \begin{subfigure}{}
      \centering
    \includegraphics[width=.4\linewidth]{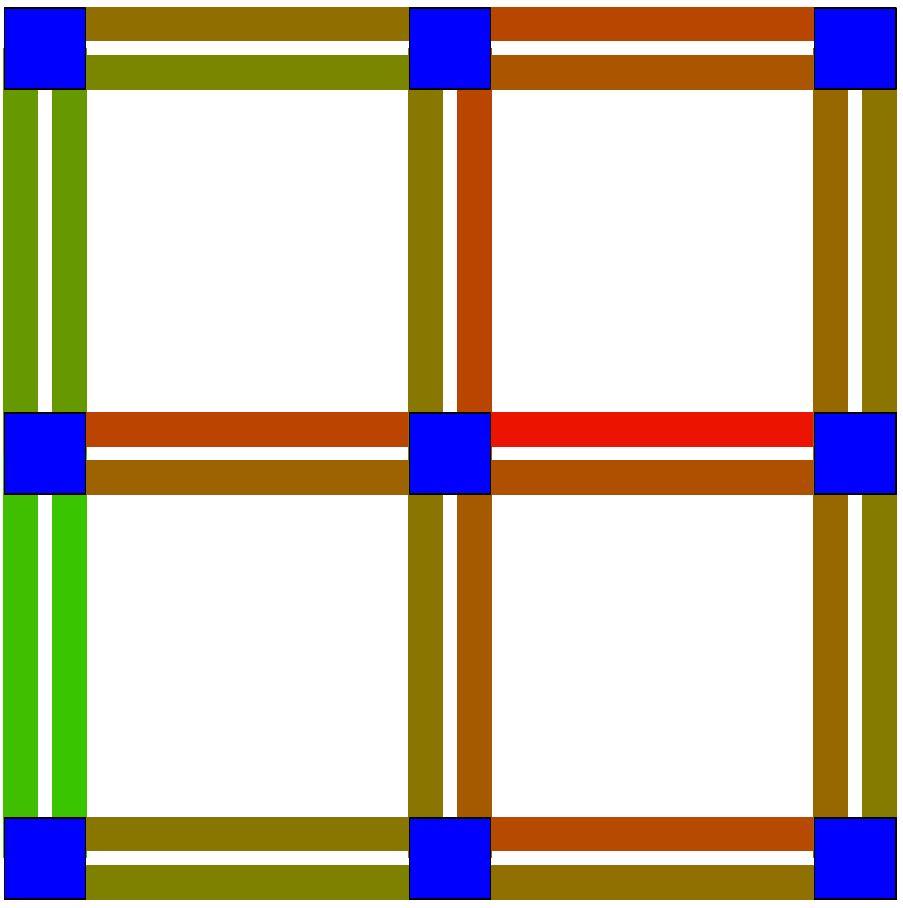}
    %\label{fig:rebalance}
  \end{subfigure}
\end{minipage}
\hspace{-1em}
\begin{minipage}{.26\textwidth}
\centering
\vspace{-3em}
  \begin{subfigure}{}
    \begin{flushright}
    \includegraphics[width=1\linewidth]{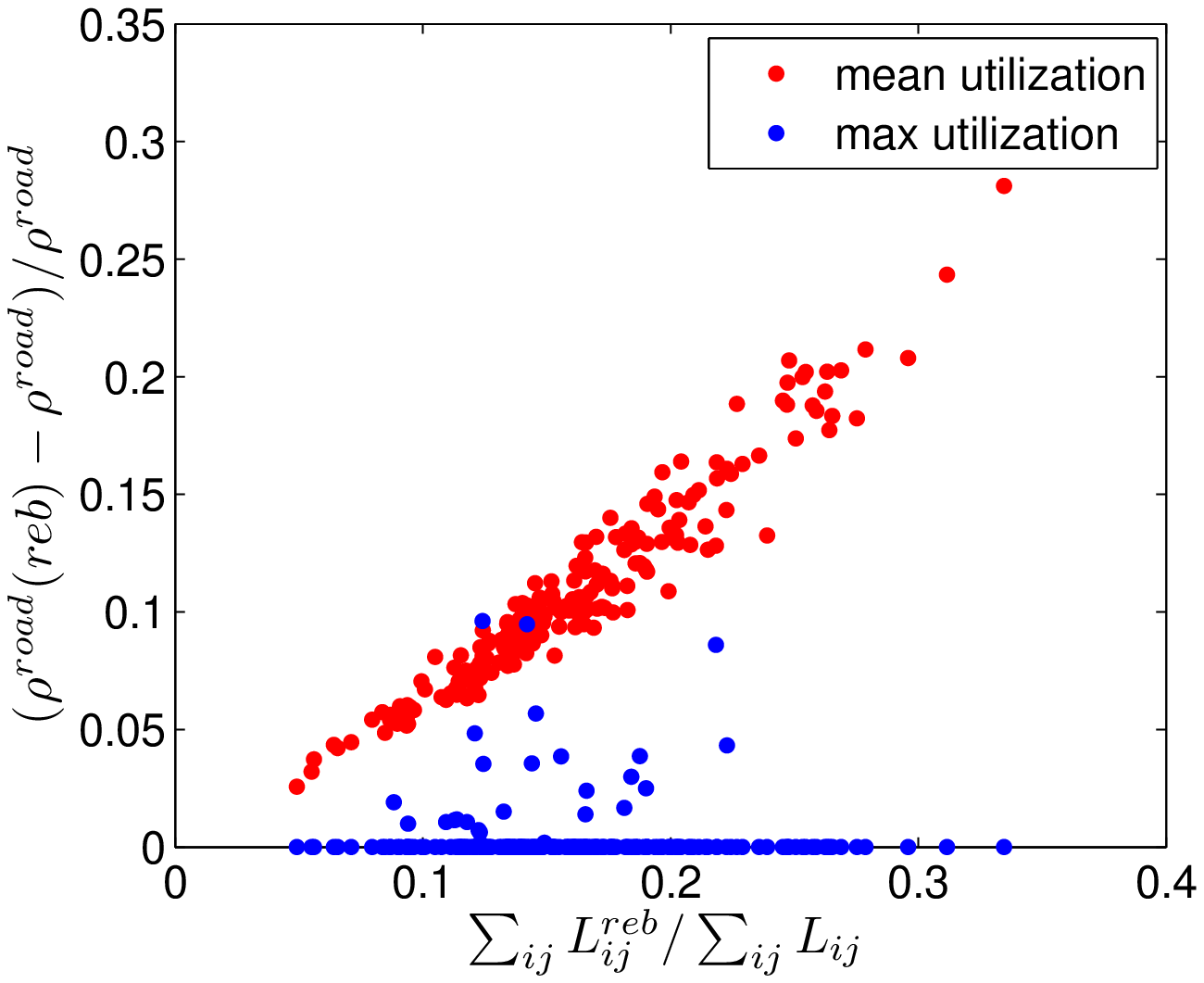}
    %\label{fig:congestionIncrease}
    \end{flushright}
  \end{subfigure}
\end{minipage}%
\caption{Top left: Layout of the 9-station road network. Each road segment has a capacity of 40 vehicles in each direction. Bottom left: The first picture shows the 9-station road network without rebalancing. The color on each road segment indicates the level of congestion, where green is no congestion, and red is heavy congestion. The second picture is the same road network with rebalancing vehicles. Right: The effects of rebalancing on congestion. The x-axis is the ratio of rebalancing vehicles to passenger vehicles on the road. The y-axis is the fractional increase in road utilization due to rebalancing.}
\label{fig:congestion}
\vspace{-0.3em}
\end{figure}
The stations are placed on a square grid, and joined by 2-way road segments each of which is 0.5 km long. Each road consists of a single lane, with a critical density of 80 vehicles/km. \footnote{If each vehicle is 5 m, this critical density represents a vehicle-to-vehicle separation of 1.5 car-lengths.} This means that the capacity of each road segment $c$ is $q_c = 40$ vehicles. Each vehicle travels at 30 km/h (8.33 m/s) in free flow, which means the travel time along each road segment is 1 minute in free flow. 

To gain insight into the general system behavior, a variety of systems with different levels of imbalance must be studied. First, arrival rates and routing distributions are randomly generated and rebalancing rates are computed using \eqref{eq:opt2}. In steady state, the fraction of vehicles in each road queue $ij$ is given by $\pi_i \tilde{p}_{ij}$ (Lemma \ref{lemma:folding}). If we assume 100\% availability ($A_i = 1$), the expected rate of vehicles entering each road queue is given by
$\Lambda_{ij} = \lambda_i p_{ij}$. Using Little's theorem, the expected number of vehicles on each road queue is given by $L_{ij} = \Lambda_{ij} T_{ij}$. 
The availability assumption can be justified by the fact that a real system would operate within the regime of high availability and that the number of vehicles on the road gets very close to $L_{ij}$ as availability increases. Similarly, the expected number of rebalancing vehicles on each road queue is given by $L^{\text{reb}}_{ij} = \beta_{ij} T_{ij}$.

 To map the queueing network onto the road network, we adopt a similar procedure as the one used to estimate $m_{ij}$ in section \ref{sec:mapping}. Recall that $B_{ij}$ is the set of paths from station $i$ to station $j$. We adopt the routing strategy that uniformly distributes vehicles from $i$ to $j$ along each path $b_{ij} \in B_{ij}$. The number of vehicles that go through each road segment, $L^{\text{road}}_{c}$, is then the sum of the number of vehicles from each station to every other station that pass through the road segment, given by
$L^{\text{road}}_c = \sum_{i,j, \text{s.t.} \,  c \in C_{b_{ij}}} L_{ij} / \lvert B_{ij} \rvert$.
Note that for stability, $L^{\text{road}}_{c} < q_c$. The road utilization is given by $\rho^{\text{road}}_c = L^{\text{road}}_{c}/q_c$. 

Figure \ref{fig:congestion} plots the vehicle and road utilization increases due to rebalancing for 500 randomly generated systems. The x-axis shows the ratio of rebalancing vehicles to passenger vehicles on the road, which represents the inherent imbalance in the system. The red data points represent the increase in average road utilization due to rebalancing and the blue data points represent the utilization increase in the most congested road segment due to rebalancing. It is no surprise that the average road utilization rate is a linear function of the number of rebalancing vehicles. However, remarkably, the maximum congestion increases are much lower than the average, and are in most cases, zero. This means that while rebalancing generally increases the number of vehicles on the road, rebalancing vehicles mostly travel along less congested routes and rarely increase the maximum congestion in the system. This can be seen in Figure \ref{fig:congestion} bottom left, where rebalancing clearly increases the number of vehicles on many roads but not the most congested road segment (from station 6 to station 5). 

In a few rare cases, the maximum congestion in the system is increased up to 10\%. This may cause heavy congestion in systems where congestion is already prevalent (say, $>$90\%). In these cases, an intelligent routing strategy becomes crucial. While uniform routing along different paths helps distribute vehicles throughout the road network, a better routing strategy would actively route vehicles away from congested roads and perhaps even limit rebalancing when it may cause further delays. This is related to the simultaneous departure and routing problem \cite{Huang2002}, a class of dynamic traffic assignment (DTA) problems, and will be the subject of future work. 

\section{Conclusions}\label{sec:conc}

In this paper we presented and analyzed a queueing-theoretical model for autonomous MOD systems. We showed that an optimal open-loop policy can be readily found by solving a linear program. Based on this policy, we developed a closed-loop, real-time rebalancing policy that appears quite efficient, and we applied it to a case study of New York City. Finally, we showed that vehicle rebalancing can have a detrimental impact on traffic congestion in already-congested systems but in most cases, rebalancing vehicles tend to travel along less congested roads.

This paper leaves numerous important extensions open for further research. First, it is of interest to develop rebalancing policies that can both route rebalancing vehicles along less congested roads and limit the number of rebalancing vehicles when the system is overly congested. Second, we plan to study different performance metrics (e.g., minimization of waiting times) and include a richer set of constraints (e.g., time windows to pick up the customers). Third, it is of interest to include in the model the provision of mass transit options (e.g., a metro) and develop optimal coordination algorithms for such an intermodal system. Fourth, we plan to consider additional case studies (e.g., from Asia and Europe) and study in more details the economic and societal benefits of robotic MOD systems. Finally, we plan to demo the algorithms on real driverless vehicles providing MOD service in a gated community.

\clearpage
\bibliographystyle{plainnat}
\bibliography{MODbib}

\clearpage
\section*{Supplemental Material}

\begin{proof}[Proof of Lemma \ref{lemma:folding}]
For each node $i \in \mathcal N$ equation \eqref{eq:traffic} can be separated into SS nodes and IS nodes as follows
\begin{align*}
\pi_i &= \sum_{j \in \mathcal{N}} \pi_j \tilde{r}_{ji} = \sum_{j \in S} \pi_j \tilde{r}_{ji} + \sum_{j \in I} \pi_j \tilde{r}_{ji}.
\end{align*}

Consider, first, an SS node, i.e., consider $i$ in $S$. Then one can write,
\begin{align*}
\pi_i &= \underbrace{\sum_{j \in S} \pi_j \tilde{r}_{ji}}_{=0} + \sum_{j \in I} \pi_j \tilde{r}_{ji} = \sum_{\substack{j \in I \\ i = \text{Child}(j)}} \pi_j,
\end{align*}
where $\sum_{j \in S} \pi_j \tilde{r}_{ji} = 0$ since SS nodes are connected exclusively by IS nodes. The last equality follows from the fact that whenever a child node of an IS node $j$ is the SS node $i$, then $\tilde{r}_{ji} = 1$. 

Consider, now, an IS node, i.e., consider $i$ in $I$. Let   $\text{Parent}(i) = k$ and $\text{Child}(i) = l$. Then one can write,
\begin{align*}
\pi_i &= \sum_{j \in S} \pi_j \tilde{r}_{ji} + \underbrace{\sum_{j \in I} \pi_j \tilde{r}_{ji}}_{=0} = \pi_k \tilde{p}_{kl},
\end{align*}
where $\sum_{j \in I} \pi_j \tilde{r}_{ji} = 0$ since IS nodes are connected \emph{exclusively} to SS nodes, and the second equality follows from the fact that 
a single SS node feeds into each IS node with probability $\tilde{p}_{kl}$. This proves the second claim. 

Collecting the results so far, we obtain, for each $i$ in $S$,
\begin{equation*}
\begin{split}
\pi_i &= \sum_{\substack{j \in I \\ i = \text{Child}(j)}} \pi_j =\sum_{\substack{j \in I \\ i = \text{Child}(j)}} \pi_{\text{Parent(j)}}\, \, \tilde{p}_{\text{Parent(j)}\, i} =  \sum_{k \in S} \pi_k \tilde{p}_{ki},
\end{split}
\end{equation*}
which proves the first claim.
\end{proof}

\begin{proof}[Proof of Lemma \ref{lemma:gamma}]
Let us prove the first part of the lemma.
By assumption, the probabilities $\{p_{ij}\}_{ij}$ constitute an irreducible Markov chain. By equation \eqref{eq:prob}, the probabilities $\{\tilde{p}_{ij}\}_{ij}$ lead to an irreducible Markov chain as well. The $\pi$ vector satisfying equation \eqref{eq:traffic2} is the steady state distribution for the transition probabilities $\{\tilde{p}_{ij}\}_{ij}$ and by the Perron-Frobenius theorem, it is positive \cite[p.673]{Meyer2000}. In other words, $\pi_i > 0$ for all $i \in S$. By the definition of the relative utilizations $\gamma_i$ (see equation \eqref{eq:gamma}), we obtain the first part of the claim.
 
Let us now consider the second part of the lemma. Recall that, by assumption, $p_{ii} = 0$ and $\alpha_{ii} = 0$. By Lemma \ref{lemma:folding}, for any $i \in S$, one can write
\begin{align*}
\pi_i &= \sum_{j \in S} \pi_j \tilde{p}_{ji} \\
&= \sum_{j \in S} \pi_j \Big(\alpha_{ji} p_j + p_{ji}(1-p_j) \Big) \\
&= \sum_{j \in S} \pi_j \Big( \alpha_{ji} \frac{\psi_j}{\lambda_j + \psi_j} + p_{ji} \frac{\lambda_j}{\lambda_j + \psi_j} \Big) \\
&= \sum_{j \in S} \frac{\pi_j}{\lambda_j + \psi_j} (\alpha_{ji} \psi_j + p_{ji} \lambda_j) \\
&= \sum_{j \in S} \gamma_j (\alpha_{ji} \psi_j + p_{ji} \lambda_j),
\end{align*}
where the second equality follows from equation \eqref{eq:prob}, the third equality follows from equation \eqref{eq:Bernoulli}, and the last equality follows from equations \eqref{eq:gamma}, \eqref{eq:muij}, and  \eqref{eq:lgen}. This concludes the proof.

\begin{equation*}
(\lambda_i + \psi_i) \gamma_i = \sum_{k \in S} \gamma_k (\alpha_{ki} \psi_k + p_{ki} \lambda_k).
\end{equation*}
By applying the constraint $\gamma_i = \gamma_k$ and dividing both sides by $\gamma_i$ one obtains 
\begin{equation*}
\lambda_i + \psi_i = \sum_{k \in S} \alpha_{ki} \psi_k + p_{ki} \lambda_k.
\end{equation*}
\end{proof}

\end{document}